\newtheorem{definition}{Definition}
\newtheorem{theorem}{Theorem}
\def\BibTeX{{\rm B\kern-.05em{\sc i\kern-.025em b}\kern-.08em
    T\kern-.1667em\lower.7ex\hbox{E}\kern-.125emX}}
\begin{document}

\title{ProTerrain: Probabilistic Physics-Informed Rough Terrain World Modeling \\ 

\thanks{This work was supported in part by the Finnish Doctoral Program Network in Artificial Intelligence (AI-DOC), under the Grant
VN/3137/2024-OKM-6; and in part by Horizon Europe Project XSCAVE under Grant 101189836.}
\author{
\IEEEauthorblockN{
Golnaz Raja\IEEEauthorrefmark{1},
Ruslan Agishev\IEEEauthorrefmark{2},
Milo\v{s} Pr\'agr\IEEEauthorrefmark{1},
Joni Pajarinen\IEEEauthorrefmark{4},
Karel Zimmermann\IEEEauthorrefmark{2},\\
Arun Kumar Singh\IEEEauthorrefmark{3},
Reza Ghabcheloo\IEEEauthorrefmark{1}
}
\IEEEauthorblockA{%
\IEEEauthorrefmark{1}Tampere University\quad
\IEEEauthorrefmark{2}Czech Technical University\quad
\IEEEauthorrefmark{3}University of Tartu\quad
\IEEEauthorrefmark{4}Aalto University}}}

\maketitle

\begin{abstract}

Uncertainty-aware robot motion prediction is crucial for downstream traversability estimation and safe autonomous navigation in unstructured, off-road environments, where terrain is heterogeneous and perceptual uncertainty is high. Most existing methods assume deterministic or spatially independent terrain uncertainties, ignoring the inherent local correlations of 3D spatial data and often producing unreliable predictions. In this work, we introduce an efficient probabilistic framework that explicitly models spatially correlated aleatoric uncertainty over terrain parameters as a probabilistic world model and propagates this uncertainty through a differentiable physics engine for probabilistic trajectory forecasting. By leveraging structured convolutional operators, our approach provides high-resolution multivariate predictions at manageable computational cost. Experimental evaluation on a publicly available dataset shows significantly improved uncertainty estimation and trajectory prediction accuracy over aleatoric uncertainty estimation baselines. 

\end{abstract}


\section{Introduction}

Robust robot motion prediction in unstructured, off-road environments is essential for reliable downstream assessment of traversability, where terrain heterogeneity, complex robot–terrain interactions, and substantial perceptual uncertainty present significant challenges.~\cite{shu2024overview,papadakis2013terrain}. Traditional methods focus on traversabilitiy estimation by building deterministic geometric maps (e.g., height, slope, roughness) from exteroceptive sensors~\cite{shu2024overview,guastella2020learning}, but typically ignore key physical terrain parameters, spatial correlation, and environmental uncertainty required for reliable off-road navigation~\cite{vasudevan2009gaussian,russell2021multivariate}.

\begin{figure}[t]
  \centering
  \includegraphics[width=\columnwidth]{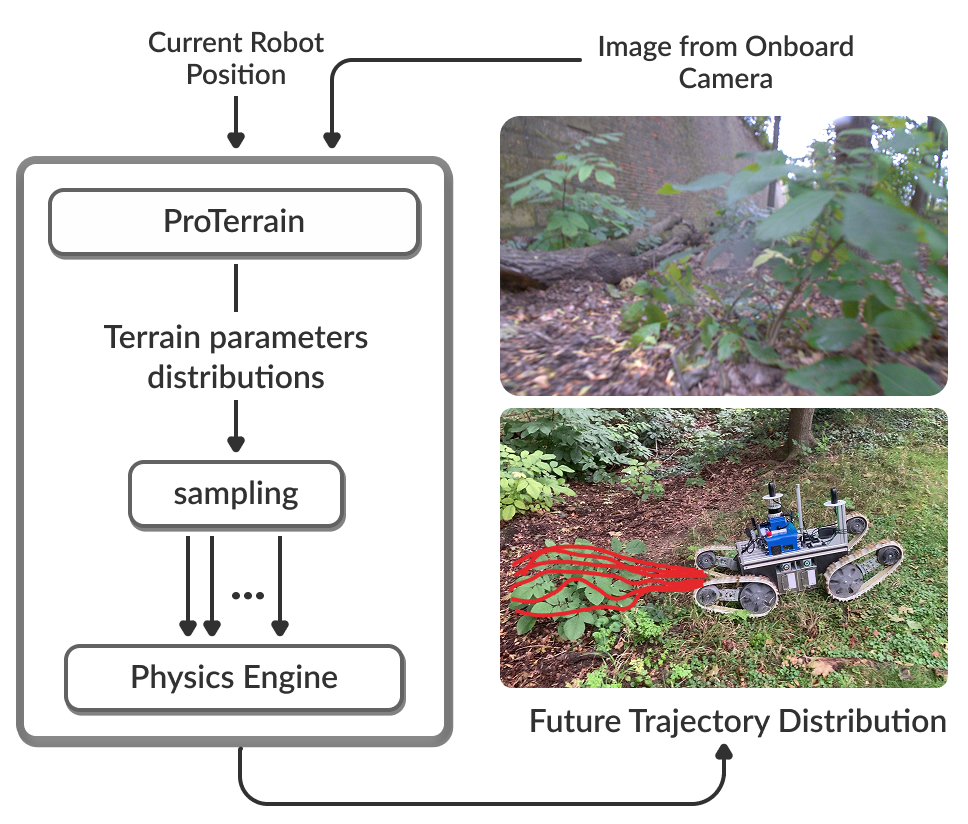}
  \caption{\textbf{Architecture overview}: Uncertainty‐aware terrain parameter estimation chained with differentiable physics.}
  \label{fig:general_pipeline}
\end{figure}

\afterpage{
\begin{figure*}[t]
  \centering
  \includegraphics[width=\textwidth,keepaspectratio]{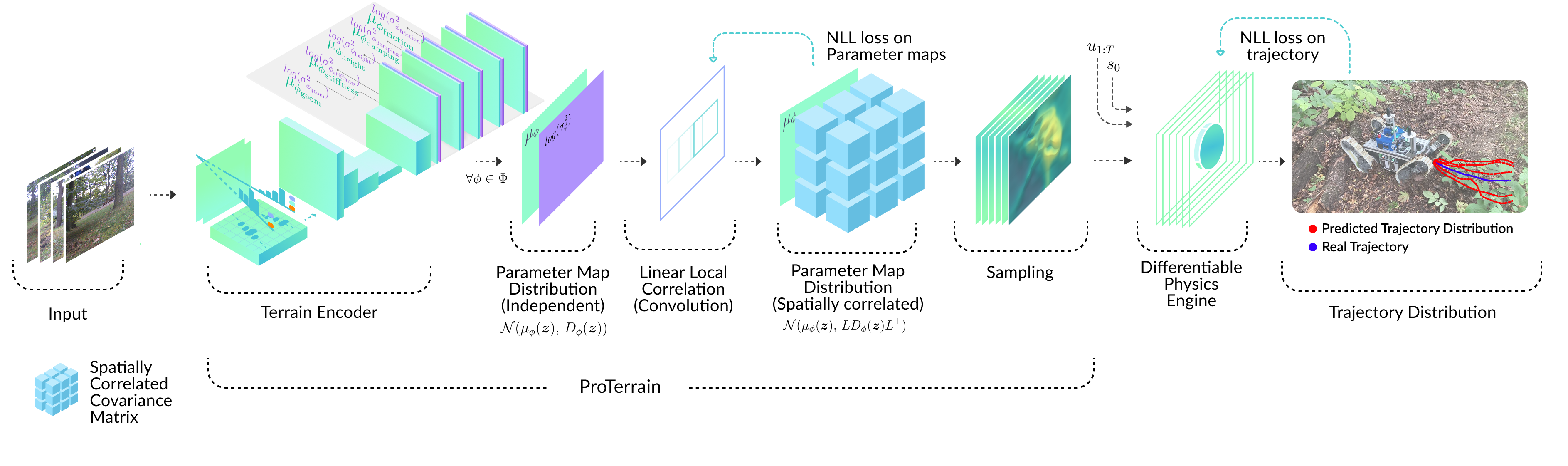}
  \caption{%
\textbf{Overview of the proposed probabilistic world model architecture.}
Given onboard camera images as input, the Terrain Encoder employs a Lift-Splat-Shoot (LSS) module to project features into bird’s-eye-view grids by estimating per-ray depth distributions and vertically integrating features. These BEV features are then processed by convolutional layers to predict mean and variance maps for multiple terrain parameters (e.g., geometric height, stiffness, friction). To capture spatially correlated uncertainty, the predicted variance maps are convolved with a fixed Gaussian kernel, forming structured multivariate Gaussian distributions over terrain parameters. Samples from the formed probabilistic world model are subsequently used as input to a differentiable physics engine for downstream probabilistic trajectory prediction.
}
  \label{fig:detailed pipeline}
\end{figure*}
}

Physics-informed terrain modeling has improved robot motion estimation by coupling perception with differentiable physics engines~\cite{agishev2024monoforce}, but most existing models treat map uncertainty as negligible—an assumption that limits safety and reliability. Recent approaches incorporate per-cell uncertainty~\cite{shaban2022semantic}, but often assume spatial independence, neglecting the inherent correlations from sensor noise, occlusion, and true surface variability, which leads to unrealistic samples and poor risk estimates~\cite{vasudevan2009gaussian}. Capturing full spatial correlation remains an open challenge due to the prohibitive computational and memory demands of high-dimensional covariance estimation. As a result, despite the recognized need for both physics-informed modeling and richer representations of uncertainty in off-road environments~\cite{shu2024overview, cai2025pietra}, explicit and tractable modeling of spatially correlated uncertainty within end-to-end, differentiable motion forecasting pipelines remains largely unaddressed.

We propose \emph{ProTerrain}, an end-to-end probabilistic framework that explicitly models spatially correlated aleatoric uncertainty in terrain parameters and propagates this uncertainty via a differentiable physics engine for probabilistic trajectory forecasting (see Fig.~\ref{fig:general_pipeline} for an overview). Leveraging structured convolutional operators for tractable, high-resolution prediction, ProTerrain achieves improved calibration and predictive accuracy over prior uncertainty baselines. \textbf{Key contributions} include: (i) end-to-end probabilistic world modeling of terrain with spatially correlated aleatoric uncertainty; (ii) scalable loss formulation based on structured covariance estimation via convolution, enabling tractable and high-resolution multivariate probabilistic predictions; (iii) uncertainty-aware trajectory prediction via differentiable physics; and (iv) extensive real-world validation demonstrating superior accuracy and calibration compared to uncertainty estimation baselines.

The paper is organized as follows: Section~\ref{Sc. Related work} surveys related work and outlines our positioning. Section~\ref{sec:problem} formulates the problem. Section~\ref{Sc.background} reviews relevant background and preliminaries. Section~\ref{sec:method} presents our methodology for enforcing spatial correlation and constructing the probabilistic world model. Section~\ref{Sc. eval} details the qualitative and quantitative evaluation. Finally, Section~\ref{Sc.conclusion} concludes with discussion and outlook.

\section{Related Work}
\label{Sc. Related work}

In this section, we provide an overview of traversability estimation methods. Existing methods can be broadly devided into three categories: (i) geometric terrain mapping, (ii) probabilistic terrain mapping, and (iii) physics-informed and dynamics-aware terrain modeling.

\subsection{Geometric Terrain Mapping}

Classical approaches estimate traversability by constructing geometric maps—such as elevation, slope, and roughness—from onboard exteroceptive sensors like LiDAR and RGB(-D) cameras~\cite{shaban2022semantic, schilling2017geometric, guan2023tnes}. However, such models primarily capture surface shape, neglecting crucial physical terrain attributes (e.g., compliance, friction, load-bearing capacity) that directly influence traversability. Environmental variability is typically encoded only through semantic classes or assigned costs, which fails to account for the specific interaction between a robot's dynamics and the terrain. Moreover, classical geometric approaches generally lack mechanisms for estimating or propagating uncertainty—a key limitation when dealing with ambiguous or novel environments.

\subsection{Probabilistic Terrain Mapping}
A widely adopted approach for incorporating uncertainty in traversability analysis is to augment geometric maps (elevation or cost) with per-cell, independently estimated variances. For example,~\cite{Fankhauser2018ProbabilisticTerrainMapping} propagates sensor and localization uncertainty into local per-cell confidence bounds. Learning-based methods~\cite{triest2024unrealnet,yang2022real} predict traversability features and associated aleatoric uncertainty by outputting a Gaussian for each map cell, usually trained with the negative log-likelihood loss of~\cite{kendall2017uncertainties}; similar independent-factor models are used in risk map construction~\cite{fan2021step,fan2021learning}. This family of methods, however, overlooks the inherent spatial correlation found in real terrain.

To address spatial dependencies, Gaussian Processes (GPs) have been used for terrain modeling, including elevation~\cite{vasudevan2009gaussian}, geometry-slip correlation~\cite{endo2022active}, and raw LiDAR mapping~\cite{hansen2023range}. However, full GP regression is computationally demanding and does not scale to real-time, high-resolution mapping.

Alternative Bayesian approaches, such as Bayesian Generalized Kernel (BGK) inference~\cite{shan2018bayesian,xue2023traversability}, offer more efficient probabilistic modeling from LiDAR data. Yet, these methods are generally limited to LiDAR-only settings, reducing robustness under adverse sensing (e.g., fog, dust) and constraining richer semantic understanding available from vision or multimodal sources.

\subsection{Physics-Informed and Dynamics-Aware Approaches}

Physics-informed and dynamics-aware methods address the core challenge of traversability by explicitly modeling robot--terrain interaction through physics-related costmaps. Approaches range from extending geometric maps to include estimates of mechanical effort~\cite{carvalho20243d}, to leveraging simulation for empirical traversability assessment, with supervision from physical outcomes such as slippage or failure events~\cite{frey2022locomotion}. Recent deep learning approaches leverage self-supervised force-torque or inertial feedback to capture terrain difficulty, and evidential learning to infer parameters with quantified uncertainty under physics-informed priors~\cite{wellhausen2019should, castro2022does, cai2025pietra}; however, these methods typically address only a subset of physical properties and do not provide end-to-end robot--terrain interaction modeling.

In contrast, frameworks like MonoForce~\cite{agishev2024monoforce} integrate differentiable physics simulators within the learning loop, enabling end-to-end training of all key terrain parameters for accurate robot motion prediction from perception. These state-of-the-art systems learn directly from trajectories in a self-supervised fashion, but remain fundamentally deterministic in their treatment of uncertainty.

\subsection{Summary and Positioning}

While prior work has explored geometric, uncertainty-aware, and physics-informed traversability estimation, few methods integrate all three aspects in a unified framework. In particular, efficient modeling of spatially correlated uncertainty—essential for realistic 3D terrain reasoning—remains an open challenge. Our work addresses this gap by introducing a probabilistic, self-supervised world model that jointly captures physics-informed terrain parameters and their spatially correlated aleatoric uncertainty in a fully end-to-end manner.

\section{Problem Formulation}
\label{sec:problem}

Our goal is to learn an end-to-end probabilistic \emph{world model} of rough 
terrain environments that predicts robot future trajectories while explicitly 
capturing input-dependent aleatoric uncertainty in both terrain parameters 
and trajectory forecasts.

\subsection{Probabilistic World Model Definition}
Let $\boldsymbol{z} \in \mathbb{R}^{N \times H \times W \times 3}$ denote the robot's onboard RGB perception, where $N$ is the number of cameras, and $H,W$ are the camera resolution. Our central object of interest is the spatial \emph{world model}---a set of latent per-location terrain parameters that govern traversability and robot-terrain interactions.

We define a finite set of terrain parameters $\Phi := \{\phi_{1},\phi_{2},...,\phi_{P}\}$, with each $\phi_p$ representing environmental properties such as geometric height, friction, stiffness, damping, or compactness. 

At any time $t$, the world model consists of spatial maps for each terrain parameter $\phi \in \Phi$, where the uncertainty for each parameter is modeled as an \emph{input-dependent} (i.e., conditional on $\boldsymbol{z}$) and \emph{aleatoric} spatially correlated Gaussian distribution. Note that parameters are assumed conditionally independent:
\begin{equation}
\label{eq:multivariate_gaussian}
     W_t = \left\{ p(\phi \mid \boldsymbol{z}) \right\}_{\phi \in \Phi}, \quad \text{with} \quad p(\phi \mid \boldsymbol{z}) = \mathcal{N}\left({\mu}_\phi(\boldsymbol{z}),\, \Sigma_\phi(\boldsymbol{z})\right).
\end{equation}

Here, $W_t$ is our probabilistic world model, ${\mu}_\phi(\boldsymbol{z}) \in \mathbb{R}^{H' \times W'}$ the predicted mean map, and $\Sigma_\phi(\boldsymbol{z}) \in \mathbb{R}^{(H'W') \times (H'W')}$ the predicted spatially correlated covariance for each terrain parameter $\phi$, both as functions of $\boldsymbol{z}$, capturing \emph{aleatoric} uncertainty---e.g., from sensor noise or ambiguous terrain. $H'$ and $W'$ denote the spatial dimensions.

To fit this probabilistic model, we will  minimize the multivariate negative log-likelihood (NLL) loss for each $\phi \in \Phi$ \cite{russell2021multivariate}:
\begin{equation}
\label{eq: general_multivariate_NLL}
\mathcal{L}_{\text{NLL}} = \frac{1}{2} \left[
    \log \det \Sigma(\boldsymbol{z}) +
    \boldsymbol{r}^\top \Sigma(\boldsymbol{z})^{-1} \boldsymbol{r}
\right],
\end{equation}
where $\boldsymbol{r} = \mathrm{vec}(\phi^\star - \mu(\boldsymbol{z}))$ is the residual between prediction and ground truth.

However, explicitly forming or inverting $\Sigma(\boldsymbol{z})$ is computationally infeasible for high-resolution maps, due to prohibitive memory and runtime requirements. For instance, even a moderate map of $128 \times 128$ cells would require more than $2$ GB of memory just to store the dense covariance matrix (of size $(H'W')^2$). To overcome this limitation, we employ a matrix-free formulation that enables scalable computation without ever constructing $\Sigma(\boldsymbol{z})$.

\subsection{Robot State and Controls}

We represent the robot state as 
$s = (\mathbf{x}, R, \mathbf{v}, \boldsymbol{\omega})$, 
where $\mathbf{x} \in \mathbb{R}^3$ is the position, 
$R \in SO(3)$ is the orientation, 
and $\mathbf{v}, \boldsymbol{\omega} \in \mathbb{R}^3$ are the linear and angular velocities. 
The control input is defined as 
$\mathbf{u} = (\mathbf{v}_c, \boldsymbol{\omega}_c)$, 
with $\mathbf{v}_c, \boldsymbol{\omega}_c \in \mathbb{R}^3$, 
and the control input sequence is denoted by 
$\mathbf{u}_{1:T}$, 
where each $\mathbf{u}_t$ encodes linear and angular velocity commands.

\subsection{Probabilistic Trajectory Prediction}
Given the estimated probabilistic world model $W_t$, which encodes spatially correlated distributions over terrain parameters, together with the initial robot state ${s}_0$ and a sequence of control commands $\mathbf{u}_{1:T}$, we predict possible future robot position trajectories via a differentiable physics engine. Assuming the distribution over trajectories is Gaussian, the input-dependent aleatoric uncertainty of future trajectory $\tau = \{ \mathbf{x}_k \}_{k=1}^T$ is modeled as:

\[
p\left({\tau} \mid W_t, {s}_0, \mathbf{u}_{1:T}\right) = \mathcal{N}\left( {\mu}_\tau(\cdot),\, \Sigma_\tau(\cdot) \right)
\]

\section{Preliminaries and Background}
\label{Sc.background}
In this section, we review prior methods and mathematical tools that underpin our probabilistic world model, including physics-informed robot-terrain modeling and scalable linear algebra techniques.

\subsection{MonoForce: Physics-informed World Modeling}
\label{Sc. Monoforce}

MonoForce~\cite{agishev2024monoforce} is an end-to-end, grey-box differentiable model for predicting robot--terrain interaction on both rigid and deformable terrains, using monocular RGB images, robot state, and control inputs. Its architecture consists of two main components: a deep terrain encoder and a differentiable physics engine.

\paragraph{Terrain Encoder}
The encoder predicts spatial maps of terrain properties from monocular images $\boldsymbol{z}$, including: geometric height $\phi_\mathrm{geom}$, support height $\phi_\mathrm{height}$, stiffness $\phi_\mathrm{stiffness}$, damping $\phi_\mathrm{damping}$, and friction $\phi_\mathrm{friction}$, all in $\mathbb{R}^{H' \times W'}$, representing per-cell estimates on a 2D grid over the local environment.

\paragraph{Differentiable Physics Engine}
The physics engine simulates the robot's dynamics on complex terrain using the predicted terrain parameter maps. Given the robot's initial state ${s}_0$ and a sequence of control commands $\mathbf{u}_{1:T}$, the model integrates the robot's trajectory by accounting for both rigid-body dynamics and terrain-induced contact forces.

The robot is modeled as a rigid body composed of $N$ mass points $P = \{ (\mathbf{p}_i, m_i) \}$, with dynamics governed by:

\begin{equation}
\label{eq:physics}
\begin{aligned}
    \dot{\mathbf{x}} &= \mathbf{v},              
    & \quad \dot{\mathbf{v}} &= \frac{1}{M} \sum_{i=1}^N \mathbf{f}_i, \\
    \dot{{R}} &= [\boldsymbol{\omega}] {R}, 
    & \quad \dot{\boldsymbol{\omega}} &= {J}^{-1} \sum_{i=1}^N (\mathbf{p}_i \times \mathbf{f}_i)
\end{aligned}
\end{equation}
where the total mass is $M = \sum m_i$, ${J}$ is the moment of inertia, $\mathbf{f}_i$ is the net force on each mass point, and $[\boldsymbol{\omega}] \in \mathbb{R}^{3 \times 3}$  is the skew-symmetric matrix representation of the angular velocity.

Each contact force $\mathbf{f}_i$ combines gravity and terrain reaction:
\[
\mathbf{f}_i = 
\begin{cases}
    [0, 0, -m_i g]^\top + \mathbf{f}_{\text{terrain}}(\mathbf{p}_i), & \text{if } \mathbf{p}_i \text{ is in contact,} \\
    [0, 0, -m_i g]^\top, & \text{otherwise.}
\end{cases}
\]
Here, the terrain reaction force is decomposed as $\mathbf{f}_{\text{terrain}}(\mathbf{p}_i) = \mathbf{f}_i^{(n)} + \mathbf{f}_i^{(\text{tan})}$.

The normal component $\mathbf{f}_i^{(n)}$ is given by:
\[
\mathbf{f}_i^{(n)} =
\begin{cases}
    k_i \big( h_i - p_{z,i} \big)\mathbf{n}_i - d_i \big(\dot{\mathbf{p}}_i^\top \mathbf{n}_i\big)\mathbf{n}_i, & p_{z,i} \leq h_i \\
    0, & \text{otherwise}
\end{cases}
\]

where $h_i$, $k_i$, and $d_i$ are the values of the predicted support height, stiffness, and damping maps-that is, $h_i = \phi_\mathrm{height}(\mathbf{p}_i)$, $k_i = \phi_\mathrm{stiffness}(\mathbf{p}_i)$, and $d_i = \phi_\mathrm{damping}(\mathbf{p}_i)$---evaluated at the position of mass point $\mathbf{p}_i$; $\mathbf{n}_i$ is the local surface normal.

And tangential (traction) forces for each point are
$\mathbf{f}_i^{(\mathrm{tan})} = \begin{bmatrix}
f_{\mathrm{long},i} & f_{\mathrm{lat},i}
\end{bmatrix}^\top$,
with $f_{\mathrm{long},i} = \mu_i m_i g\, [\sigma(u - v_x) - 0.5]$ and $f_{\mathrm{lat},i} = \mu_i m_i g\, [\sigma(-v_y) - 0.5]$, where $\mu_i = [\phi_\mathrm{friction}]_i$ is the local friction coefficient, $u$ is the commanded linear speed, and $v_x$, $v_y$ are the $x$ and $y$ components of the velocity of $\mathbf{p}_i$ in the local robot (body) frame.

Given the robot state, and the contact points $\mathbf{p}_i$, consequently the contact forces $\mathbf{f}_i$ are calculated, and ODE defined in (\ref{eq:physics}) is solved to find the next state, and computation repeats to calculate the whole trajectory.

\subsection{Conjugate Gradient (CG)}
\label{Sc. conjugate gradient}

The conjugate gradient (CG) algorithm~\cite{saad2003iterative} is an efficient, iterative method for solving large linear systems of the form $A \mathbf{x} = \mathbf{b}$, with $A \in \mathbb{R}^{n \times n}$ symmetric and positive semi-definite, and $\mathbf{b} \in \mathbb{R}^n$. Iterative CG algorithm relies only on repeatedly applying a matrix-vector multiplication function (``matvec'') implementing $\mathbf{x} \mapsto A \mathbf{x}$. Convergence is determined by the reduction of the residual norm relative to the initial value.

\subsection{Structured Convolution Matrix}
\label{Sc. toeplitz_matrix}

A \emph{Toeplitz matrix} is a matrix in which each descending diagonal from left to right is constant. This special structure arises when expressing the convolution operation in matrix form.

\begin{definition}[2D Convolution as Toeplitz Matrix Multiplication]
\label{def:toeplitz_matrix}
As described in~\cite{gray2006toeplitz}, let $X \in \mathbb{R}^{H \times W}$ be a 2D image and $\mathbf{x} = \mathrm{vec}(X) \in \mathbb{R}^n$ its row-major vectorization ($n = H \times W$). For a fixed convolution kernel $g \in \mathbb{R}^{k \times k}$, the standard 2D convolution $Y = g * X$ can be equivalently written as
\[
\mathbf{y} = L \mathbf{x},
\]
where $\mathbf{y} = \mathrm{vec}(Y) \in \mathbb{R}^n$, and $L \in \mathbb{R}^{n \times n}$ is a structured Toeplitz (more precisely, block Toeplitz with Toeplitz blocks) matrix determined entirely by the kernel $g$ (and the convolution’s padding scheme), independent of $\mathbf{x}$.
\end{definition}

\section{ProTerrain Methodology}
\label{sec:method}
To address the problem formulated in Section~\ref{sec:problem}, we introduce a novel structured formulation for covariance matrix that enables efficient, scalable modeling of spatially correlated aleatoric uncertainty via an implicit (matrix-free) approach. This formulation is coupled with a tractable negative log-likelihood (NLL) loss, allowing us to learn spatial correlations in high-dimensional settings (see Section~\ref{sc: correlated_uncertainty_estimation}).

We next describe our probabilistic terrain world modeling (described in Section \ref{Sc. Monoforce}) to produce spatially correlated uncertainty estimates for all terrain-relevant parameters using our devised structured probabilistic formulation (see Section~\ref{sec:probabilistic_world_architecture}). Subsequently, Section~\ref{Sc.traj_architecture} explains how the estimated uncertainty is rigorously propagated to trajectory distribution prediction by integrating samples through the differentiable physics engine.

Finally, we detail the end-to-end training pipeline, including the combination of our spatially correlated terrain losses with probabilistic trajectory supervision (see Section~\ref{Sc. losses}).

\subsection{Structured Aleatoric Uncertainty Formulation}
\label{sc: correlated_uncertainty_estimation}

We propose a core probabilistic formulation for spatially correlated aleatoric uncertainty by modeling each predicted terrain parameter map as a structured multivariate Gaussian distribution. This approach enables learning of realistic, spatially coherent uncertainty patterns across high-dimensional maps.

As described in section \ref{sec:problem}, learning probability distribution in (\ref{eq:multivariate_gaussian}) can be done via minimizing the standard multivariate negative log-likelihood (NLL) in (\ref{eq: general_multivariate_NLL}). However, directly inverting the full covariance matrix $\Sigma(\boldsymbol{z})$ or computing its determinant---as required by this loss function)---is computationally intractable in high dimensions. Instead, we reformulate the problem: rather than computing $\Sigma^{-1}$ explicitly, we solve the corresponding linear system $\Sigma \mathbf{a} = \mathbf{r}$ using the iterative conjugate gradient algorithm, which requires only matrix-vector products with $\Sigma$ (see Section~\ref{Sc. conjugate gradient}).

To further avoid the prohibitive cost of storing or learning a dense covariance matrix, we parameterize $\Sigma(\boldsymbol{z})$ implicitly to enforce spatial correlation through a convolutional structure. Specifically, we first predict a diagonal variance matrix $D$ (comprising independent per-cell variances), and then impose local spatial dependencies via convolution with a fixed kernel $g$. As shown in Definition~\ref{def:toeplitz_matrix}, such convolution can be represented as a matrix-vector multiplication by a structured Toeplitz matrix $L$ corresponding to $g$.

To ensure that $\Sigma$ is symmetric and positive semi-definite, we construct the spatially correlated covariance matrix as follows:

\begin{equation}
    \Sigma(\boldsymbol{z}) = L D L^\top,
    \label{eq:LDL}
\end{equation}
where $D \in \mathbb{R}^{n \times n}$ is a diagonal matrix containing the per-cell predicted variances ($D = \operatorname{diag}\left( {\sigma}^2(\mathbf{z}) \right)$), and $L \in \mathbb{R}^{n \times n}$ is the block Teoplitz matrix corresponding to $g$ as defined in Definition \ref{def:toeplitz_matrix}.

Consequently, given an arbitrary vector $\mathbf{x} = \mathrm{vec}(X)$ (the vectorized form of an input map $X$), any operation of the form $L \mathbf{x}$ (or $L^\top \mathbf{x}$) corresponds to a convolution of the input map with the kernel $g$ (or its flipped version $g^\top$):
\begin{equation}
\label{eq:mat_vec}
    L\, \mathrm{vec}(X) \equiv \operatorname{vec}(g * X).
\end{equation}

Similarly, with this structured covariance formulation, all required matrix-vector products involving $\Sigma$ can be performed implicitly, without explicitly constructing $\Sigma$ or $L$, as follows:
\begin{equation}
\label{eq:LDL_matvec}
\mathbf{x} \mapsto L D L^\top \mathbf{x} \; \Longleftrightarrow \; \mathbf{x} \mapsto \mathrm{vec}\left( g * (\, \sigma^2 \odot (g^\top * X) \,) \right),
\end{equation}
where $\sigma^2$ is the predicted variance map, $g^\top$ denotes a spatially flipped version of $g$, and $\odot$ is elementwise multiplication.

Next, we show how this structure yields to a computationally tractable, closed-form loss formulation for practical learning.

\begin{theorem}[NLL for Structured Covariance]
\label{thm:per_pixel_nll}
Let $\Sigma = L D L^\top$ be a structured covariance matrix, where $L \in \mathbb{R}^{n \times n}$ is a fixed matrix representing convolution with a kernel $g$, $D = \operatorname{diag}(\sigma^2)$ is a diagonal matrix of per-cell predicted variances, and $n = H' \times W'$ is the total number of spatial locations. 

NLL loss (\ref{eq: general_multivariate_NLL}) can be written as a sum of per-cell terms,
\begin{equation}
\label{eq:our_nll_loss}
\mathcal{L}_{\mathrm{NLL}} = \frac{1}{2} \sum_{i=1}^n \left( b_i^2 + \log \sigma_i^2 \right),    
\end{equation}

where $\mathbf{b} = D^{1/2} L^\top \mathbf{a}$ and $\mathbf{a} = \Sigma^{-1} \mathbf{r}$.
Evaluation of $\mathcal{L}_{\mathrm{NLL}}$ and its gradients only relies on the vectors $\mathbf{a}$ and $\mathbf{b}$,  which can be computed efficiently via convolutions and does not require explicitly forming or inverting $\Sigma$.
\end{theorem}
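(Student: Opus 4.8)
The plan is to split the NLL in (\ref{eq: general_multivariate_NLL}) into its two additive pieces---the quadratic form and the log-determinant---and to reduce each, using the factorization $\Sigma = LDL^\top$ from (\ref{eq:LDL}), to quantities that are computable by convolutions alone. I would begin with the quadratic term, which is the cleanest. Substituting the definition $\mathbf{a} = \Sigma^{-1}\mathbf{r}$ and using that $\Sigma$ is symmetric with $\Sigma\mathbf{a} = \mathbf{r}$, I would rewrite $\mathbf{r}^\top\Sigma^{-1}\mathbf{r} = \mathbf{r}^\top\mathbf{a} = \mathbf{a}^\top\Sigma\mathbf{a} = \mathbf{a}^\top L D L^\top\mathbf{a}$. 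Inserting $D = D^{1/2}D^{1/2}$ (well defined since the $\sigma_i^2$ are variances, so $D\succeq 0$) and regrouping gives $\mathbf{r}^\top\Sigma^{-1}\mathbf{r} = (D^{1/2}L^\top\mathbf{a})^\top(D^{1/2}L^\top\mathbf{a}) = \mathbf{b}^\top\mathbf{b} = \sum_i b_i^2$ with $\mathbf{b} = D^{1/2}L^\top\mathbf{a}$, exactly as claimed. This step is purely algebraic and uses only the symmetry of the diagonal factor $D^{1/2}$.

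For the log-determinant I would invoke multiplicativity of $\det$ on the product $LDL^\top$ to obtain $\log\det\Sigma = \log\det D + 2\log\det L = \sum_i\log\sigma_i^2 + 2\log\det L$, the first equality because $D$ is diagonal. The only nontrivial point---and the step I expect to be the main obstacle---is the stray $2\log\det L$ term, since the target identity (\ref{eq:our_nll_loss}) contains just $\sum_i\log\sigma_i^2$. I would resolve this by observing that $L$ is fixed by the kernel $g$ and the padding scheme of Definition~\ref{def:toeplitz_matrix} and does not depend on the network output $\boldsymbol{z}$, on $D$, or on the residual $\mathbf{r}$; hence $2\log\det L$ is an additive constant whose gradient with respect to every trainable quantity vanishes identically. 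It therefore drops out of both the optimization and the reported loss, so the stated equality holds up to this irrelevant constant offset. (In the special case where $g$ induces a unit-diagonal triangular $L$, one has $\det L = 1$ and the identity is exact.) I would state this assumption explicitly rather than leave it implicit, since without it the equation is off by $\log\det L$.

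Finally I would discharge the computational claim. By (\ref{eq:LDL_matvec}), each matvec $\mathbf{x}\mapsto\Sigma\mathbf{x}$ is realized as the composition $g*(\sigma^2\odot(g^\top*X))$, so $\mathbf{a} = \Sigma^{-1}\mathbf{r}$ is produced by the matrix-free conjugate gradient iteration of Section~\ref{Sc. conjugate gradient} without ever forming $\Sigma$; one further application of the flipped-kernel convolution $L^\top$ followed by elementwise scaling by $\sigma = D^{1/2}$ then yields $\mathbf{b}$. Together with the per-cell $\sigma_i^2$, the vectors $\mathbf{a}$ and $\mathbf{b}$ are all that (\ref{eq:our_nll_loss}) requires. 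For the gradient statement I would note that differentiating the loss touches only $\mathbf{a}$, $\mathbf{b}$, and $\sigma^2$, all of which arise from convolutions and linear solves that admit matrix-free adjoints, so no dense $\Sigma$ or $\Sigma^{-1}$ is ever needed, completing the tractability argument.
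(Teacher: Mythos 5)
Your proposal is correct and follows essentially the same route as the paper's own proof: the identical rewriting $\mathbf{r}^\top\Sigma^{-1}\mathbf{r} = \mathbf{a}^\top L D L^\top \mathbf{a} = \mathbf{b}^\top\mathbf{b}$, the same factorization $\log\det\Sigma = 2\log\det L + \sum_i \log\sigma_i^2$ with the fixed $2\log\det L$ dropped as a constant, and the same matrix-free CG/convolution argument for computing $\mathbf{a}$ and $\mathbf{b}$. Your explicit remark that the stated equality holds only up to the additive constant $2\log\det L$ (exact when $\det L = 1$) is a slightly more careful statement of a point the paper treats informally.
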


\begin{proof}

\textbf{Mahalanobis term.}

We can write the first term in NLL (\ref{eq: general_multivariate_NLL}) as:
\[
\mathbf{r}^\top \Sigma^{-1} \mathbf{r} = \mathbf{r}^\top \mathbf{a} = \mathbf{a}^\top \mathbf{r} = \mathbf{a}^\top (\Sigma \mathbf{a}) = \mathbf{a}^\top (L D L^\top \mathbf{a}).
\]
Define $\mathbf{v} = L^\top \mathbf{a}$, so $\mathbf{r}^\top \Sigma^{-1} \mathbf{r} = \mathbf{v}^\top D \mathbf{v}$. Let $\mathbf{b} = D^{1/2} \mathbf{v} = D^{1/2} L^\top \mathbf{a}$, then
\[
\mathbf{r}^\top \Sigma^{-1} \mathbf{r} = \mathbf{b}^\top \mathbf{b} = \sum_{i=1}^n b_i^2.
\]

\textbf{Log-determinant term.}
Since $D$ is diagonal and $L$ is fixed, the determinant factorizes~\cite{horn2012matrix}:
\[
\log \det \Sigma = \log \det(L D L^\top) = 2 \log \det L + \sum_{i=1}^n \log \sigma_i^2.
\]
As $L$ is fixed during training, the term $2 \log \det L$ can be treated as a constant and omitted from the loss. Combining the two terms and neglecting the constant terms, we proof (\ref{eq:our_nll_loss}).

To obtain $\mathbf{a}$, we solve the linear system
\[
\Sigma \mathbf{a} = \mathbf{r},
\]
where $\Sigma = L D L^\top$. Due to the high dimensionality, this is accomplished using an iterative CG method (decribed in section \ref{Sc. conjugate gradient}), which requires repeated matrix-vector multiplications with $\Sigma$, which can be computed without explicitly forming $\Sigma$ via convolution operators as described in (\ref{eq:LDL_matvec}).

Similarly, the vector $\mathbf{b}$ is computed as
\[
\mathbf{b} = D^{1/2} L^\top \mathbf{a} \quad\Longleftrightarrow\quad \mathbf{b} = \mathrm{vec}( \sigma \odot (g^\top * A) ),
\]
where $A$ is the 2D reshaped form of $\mathbf{a}$, and $\sigma$ is the predicted standard deviation map.

Thus, each loss computation involves $K$ conjugate gradient iterations (each with two convolutions and an elementwise multiplication), one additional convolution with $g^\top$ to compute $b$, and per-cell arithmetic for the closed-form loss. In practice, $K \approx 50$ CG iterations are generally sufficient for convergence, yielding a total computational complexity of $O(K H' W')$ per training sample. This efficiency makes our method suitable for practical uncertainty-aware terrain estimation at high spatial resolution.

\end{proof}

Throughout this work, we use the notation $\mathcal{L}_{\mathrm{NLL}}[\phi,\, \phi^\star]$ to denote the negative log-likelihood (NLL) loss devised in Theorem \ref{thm:per_pixel_nll} between a predicted input-conditioned probability distribution of parameter map $\phi$ and its ground-truth counterpart $\phi^\star$.

\begin{figure*}[ht]
  \centering
  \includegraphics[width=\textwidth,keepaspectratio]{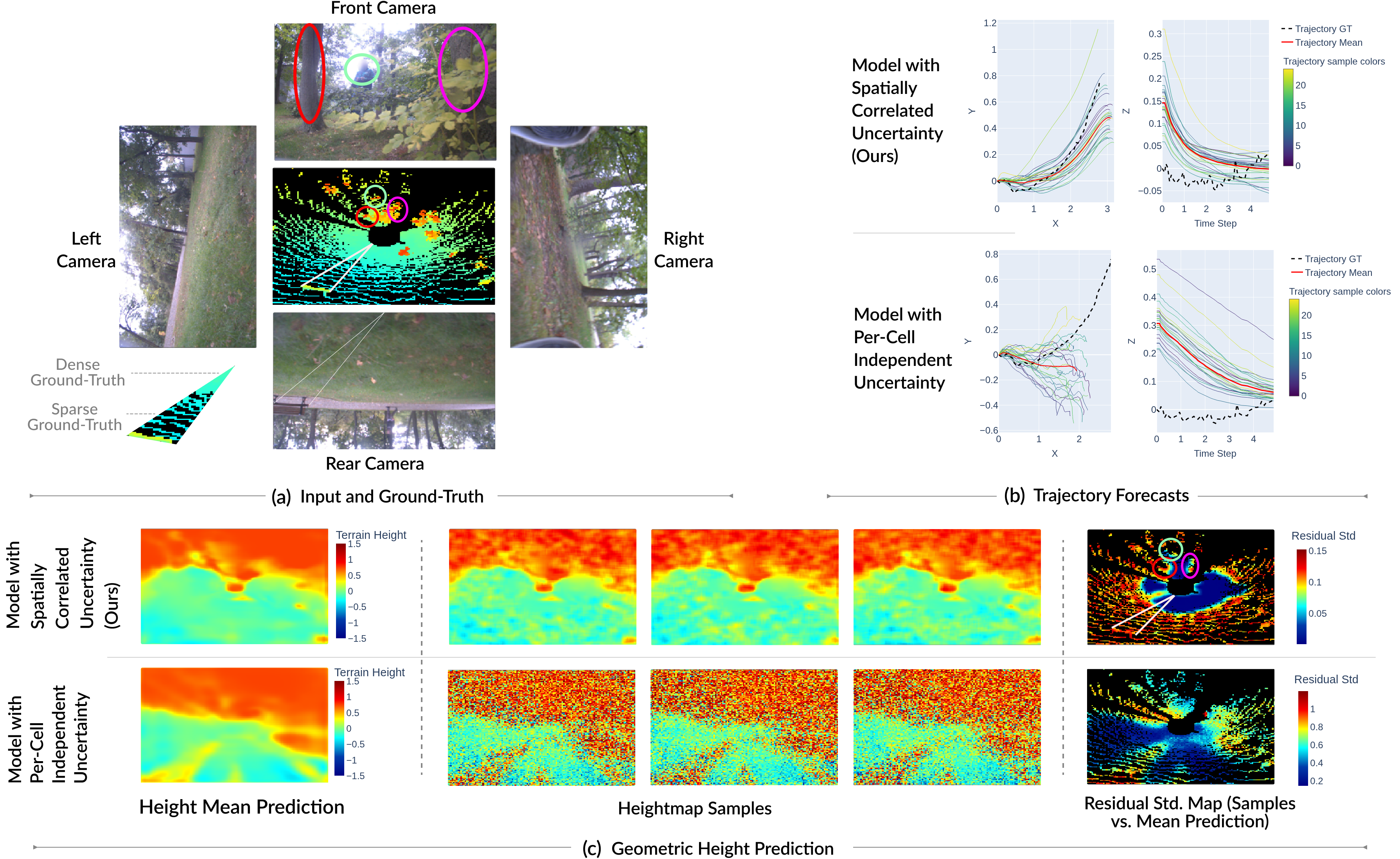}
  \caption{
\textbf{Qualitative comparison of our proposed method versus the standard per-cell independent uncertainty formulation.} (a) Ground-truth geometric heightmap (from LiDAR) within the camera field of view, with corresponding RGB input images. (b) Predicted mean geometric heightmap $\mu_{\phi_{\mathrm{geom}}}$, example probabilistic samples, and the residual standard deviation over 50 samples (visualizing overall uncertainty). (c) Predicted trajectory samples from both methods, compared with ground truth, visualized in the $xy$-plane and as height over time.
}
  \label{fig:comparison}
\end{figure*}

\subsection{Probabilistic World Model Architecture}
\label{sec:probabilistic_world_architecture}

We now describe our architecture for predicting a probabilistic, spatially correlated world model of terrain. As illustrated in Fig.~\ref{fig:detailed pipeline}, the network receives onboard images $\boldsymbol{z}$ and produces per-parameter spatial distributions, which are subsequently used as inputs to the physics engine that models terrain-robot interaction.

\paragraph{Backbone and Feature Extraction} 

We adopt the same backbone as MonoForce~\cite{agishev2024monoforce} for terrain encoding. Given onboard images $\boldsymbol{z}$, we employ the Lift-Splat-Shoot (LSS) framework~\cite{philion2020lift} for image-to-heightmap projection. LSS associates each image pixel with a camera ray, and along each ray the network predicts a discrete depth probability distribution (blue histogram in Fig.~\ref{fig:detailed pipeline}) together with per-depth visual embeddings. These features are accumulated along the ray to generate vertically-integrated feature maps, which are subsequently projected onto a 2D spatial grid representing the terrain. A convolutional encoder is then applied to extract visual features that capture local terrain properties.

\paragraph{Probabilistic Parameter Heads}
For every grid cell, the projected terrain features are further processed through deep convolutional heads to estimate, for each terrain parameter $\phi$, two output channels: a mean map $\mu_{\phi}$ and a (log-)variance map $\log \sigma^2_\phi$. We predict parameters $\phi \in \{\phi_{\mathrm{geom}},\; \phi_{\mathrm{height}},\; \phi_{\mathrm{stiffness}},\; \phi_{\mathrm{damping}},\; \phi_{\mathrm{friction}}\}$ as in MonoForce, sufficient for simulating robot-terrain interaction (see Section~\ref{Sc. Monoforce}).

\paragraph{Inducing Spatial Correlation}
To model spatial correlation between terrain cells, we impose local structure on the predicted per-cell variance maps by convolving the predicted variance map $\sigma^2_\phi(\boldsymbol{z})$ with a fixed 2D Gaussian kernel $g$. This yields a spatially correlated covariance matrix $\Sigma_{\phi} = L D_\phi L^\top$ for each parameter, as described in Section~\ref{sc: correlated_uncertainty_estimation}, and together with the mean $\mu_{\phi}$ forms our probabilistic world model $W_t$.

\subsection{Probabilistic Trajectory Prediction Architecture}
\label{Sc.traj_architecture}

Given the probabilistic world model $W_t$, initial robot state $s_0$, and control input sequence $\mathbf{u}_{1:T}$, we propagate the estimated uncertainty to predict trajectory distributions.

As illustrated in Fig.~\ref{fig:detailed pipeline}, for each spatially correlated terrain parameter distribution, we draw $M$ independent samples $\{\phi^{(m)}\}_{m=1}^M$ to capture uncertainty at the trajectory level. These samples are generated using the reparameterization trick with Monte Carlo sampling~\cite{kingma2013auto}:
\begin{equation}
    \mathrm{vec}(\phi^{(m)}) = \mathrm{vec}(\mu_{\phi}) + L \sqrt{D_\phi} \boldsymbol{\epsilon^{(m)}},
\quad \boldsymbol{\epsilon^{(m)}} \sim \mathcal{N}(0, 1),
\end{equation}
where $\boldsymbol{\epsilon^{(m)}}$ is a standard normal noise vector, scaled by the standard deviation matrix $\sqrt{D_\phi}$, and convolved with kernel $g$ (corresponding to $L$ in the matrix-vector formulation, see~(\ref{eq:mat_vec})).

Each sampled terrain parameter map $\phi^{(m)}$ (for $\phi \in \Phi$) is then used to construct world model sample $W_t^{(m)}$. Combined with initial state $s_0$ and control sequence $\mathbf{u}_{1:T}$, each $W_t^{(m)}$ is then processed by the differentiable physics engine (see~(\ref{eq:physics})), producing simulated trajectories $\tau^{(m)}$ and the associated forces.

At each time step $t = 1, \ldots, T$ and for each coordinate $d \in \{x, y, z\}$ of the robot position $\mathbf{x}_t$, we fit a univariate Gaussian distribution to the Monte Carlo trajectory samples. The sample mean and variance provide the point estimate $\mu_{t,d}$ and aleatoric uncertainty~$\sigma^2_{t,d}$ for each trajectory coordinate and time step, yielding our probabilistic trajectory forecast.

\subsection{Learning pipeline and Losses}
\label{Sc. losses}

We train the entire architecture by minimizing a weighted sum of losses, providing supervision on parameter maps $\phi \in \Phi$ in our terrain world model $W_t$ as well as on robot trajectory prediction:
\begin{equation}
\label{eq:loss_total}
\mathcal{L}_{\mathrm{total}} = \sum_{\phi \in \Phi}\lambda_{\phi} \mathcal{L}_{\phi} + \lambda_{\mathrm{traj}} \mathcal{L}_{\mathrm{traj}}.
\end{equation}

Here, $\{\lambda_\phi\}_{\phi \in \Phi}$ denotes a set of non-negative weighting coefficients controlling the relative contribution of each parameter map loss $\mathcal{L}_\phi$, and $\lambda_{\mathrm{traj}} \in \mathbb{R}_{+}$ is the weight for the trajectory loss term.

We supervise terrain parameter maps using our structured negative log-likelihood (NLL) formulation with spatial correlation (\ref{eq:our_nll_loss}), together with an explicit out-of-field-of-view (oFoV) variance regularization. The loss for each parameter map is given by:
\begin{equation}
    \mathcal{L}_{\phi} = \mathbf{W}_{\phi} \odot \mathcal{L}_{\mathrm{NLL}}[\phi_{\phi}, \phi_{\phi}^\star] 
    + \mathcal{L}_{\mathrm{oFoV}}[\log \sigma^2_{\phi}]
\end{equation}
where $\mathcal{L}_{\mathrm{NLL}}[\phi, \phi^\star]$ is the spatial NLL between predicted parameter $\phi$ and its ground-truth $\phi^\star$, computed pointwise at map coordinates that ground-truth data is available (i.e., within sensor field-of-view---indicated by $\mathrm{W_{\phi}}$ as a per-cell indicator). The oFoV regularization term $\mathcal{L}_{\mathrm{oFoV}}$ penalizes deviation of the predicted log-variance map from a calibrated prior $\sigma_0^2$ in regions lacking supervision (outside sensor FOV):

\begin{equation}
    \mathcal{L}_{\mathrm{oFoV}}[\log \sigma^2] = \lambda \; \mathbb{E}_{i \in \text{oFoV}} \left[ \left( \sigma^2_i - \sigma_0^2 \right)^2 \right],
\end{equation}
where the expectation is over all out-of-FOV locations, $\lambda$ is a regularization weight.

This design ensures robust learning: NLL is minimized only where ground-truth supervision is available, while log-variance in unsupervised (high-uncertainty) areas is encouraged to remain near the prior, thus avoiding degeneracy and yielding stable and interpretable uncertainty maps.

In the trajectory loss $\mathcal{L}_{\mathrm{traj}}$, the heteroscedastic Gaussian NLL from \cite{kendall2017uncertainties} is applied independently to each trajectory dimension $d \in \{x, y, z\}$ at each time step:
\begin{equation}
\label{eq:traj_NLL}
  \mathcal{L}_{\mathrm{traj}}
  = \sum_{t=1}^{T} \sum_{d \in \{x, y, z\}}
    \left[
      \frac{\bigl(\tau^{\star}_{t,d} - \mu_{t,d}\bigr)^2}{2\,\sigma_{t,d}^2}
      \;+\;\frac{1}{2}\log\sigma_{t,d}^2
    \right].
\end{equation}

This formulation enables the model to discover latent terrain properties that best explain observed robot motion, even in the absence of direct ground-truth supervision for those parameters.

\section{Experiments}
\label{Sc. eval}
We conduct extensive experiments in unstructured, real-world environments to evaluate our method’s ability to handle aleatoric uncertainties (e.g., sensor noise and occlusion) against baselines, with the goal of validating improvements in uncertainty modeling and trajectory forecasting.

\subsection{Dataset Overview}
\label{sec:dataset}

We evaluate our method on the publicly available dataset ROUGH\footnote{\url{https://github.com/ctu-vras/monoforce}}, which comprises several hours of off-road driving data recorded with two mid-sized tracked robotic platforms. The trajectories traverse a wide range of challenging terrains—including tall grass, overhanging branches, mud, and dense undergrowth. Each vehicle is outfitted with an Ouster OS0-128 LiDAR, as well as Basler RGB cameras. Robot poses are obtained via an ICP-based SLAM pipeline~\cite{pomerleau2013comparing}, which provides more reliable localization than GPS under forest canopy.

\subsection{Model Learning Setup}
For learning pipeline, we limit our probabilistic formulation and direct supervision to geometric, and support height terrain parameters ($\phi_\text{geom}$, $\phi_\text{height}$), leaving other terrain parameters in a deterministic format, getting self-supervised from trajectory loss. For $\phi_{geom}$, Lidar-based heightmap is used as ground-truth  and the ground truth of $\phi_{height}$ has been constructed using backpropagating robot traversed trajectories into map (similar to monoforce \cite{agishev2024monoforce}).  Height maps are represented as $128 \times 128$ grids at $0.1$ m per-cell resolution, offering sufficiently fine detail for local terrain modeling and demonstrating the scalability of our approach to high-dimensional representations.

\subsection{Qualitative Evaluation}
\label{sec:qualitative}

Figure~\ref{fig:comparison} shows qualitative results on a held-out test sequence, comparing our probabilistic approach with a standard per-cell independent model using the Gaussian negative log-likelihood formulation of~\cite{kendall2017uncertainties} (both trained with identical hyperparameters and network architecture).

As seen in panels (b) and (c), our model produces smoothly varying heightmap samples in regions of high uncertainty, while the per-cell baseline yields unrealistic, discontinuous variations. Consequently, our trajectory samples are coherent and closely follow the ground truth, whereas the per-cell model produces jittery and noisy forecasts.

Our model expresses lower uncertainty near the robot—where LiDAR supervision is strongest—and higher uncertainty in regions with sparse supervision (see white triangle in the ground-truth and residual Std. maps in Fig.~\ref{fig:comparison}). Elevated uncertainty is also observed in front camera regions with high vegetation and occlusions, accurately reflecting perceptual ambiguities (see colored ovals). In contrast, the per-cell model fails to consistently capture these uncertainty patterns in occluded or sparsely supervised areas.

\begin{table}[htbp]
\centering
\caption{Trajectory performance under different supervision settings. 
Abbreviations: Det.=Deterministic, Indep.=Independent, SC=Spatially Correlated (ours).}
\label{tab:traj}
\begin{tabular}{llcccc}
\toprule
Supervision & Method & ATE ${\downarrow}$ & bestATE ${\downarrow}$ & ECPE ${\downarrow}$ & ES ${\downarrow}$ \\
\midrule
Geom+Sup. 
& Det. & 0.6587 & - & - & - \\
& Indep. & 0.7198 & 0.7165 & 0.5593 & 4.8279 \\
& SC & \textbf{0.6386} & \textbf{0.6365} & \textbf{0.5406} & \textbf{4.3369} \\
\midrule
Geom
& Det. & \textbf{0.6550} & - & - & - \\
& Indep. & 2.4277 & 2.3845 & 0.5514 & 15.7464 \\
& SC & 0.6588 & \textbf{0.6585} & \textbf{0.5276} & \textbf{4.3079} \\
\midrule
Sup. Height 
& Det. & 0.6569 & - & - & - \\
& Indep. & 0.7167 & 0.7137 & 0.5762 & 4.8346 \\
& SC & \textbf{0.6492} & \textbf{0.6475} & \textbf{0.5242} & \textbf{4.4350} \\
\bottomrule
\end{tabular}
\end{table}


\subsection{Quantitative Evaluation}
\label{sec:quantitative}

We quantitatively benchmark our spatially correlated uncertainty formulation (SC) against two baselines—Deterministic (Det.) and Independent per-cell Gaussian (Indep.)—under three terrain supervision regimes: (i) both geometry and support height supervised (Geom+Sup.), (ii) geometry supervised only (Geom), and (iii) support height supervised only (Sup. Height). All models share the same architecture, physics engine, and training settings.

Trajectory forecasting performance is assessed using Mean Absolute Trajectory Error (ATE), BestATE (minimum over 10 Monte Carlo samples), Energy Score (ES)~\cite{gneiting2007strictly}, and \textcolor{black}{Expectation of Coverage Probability} Error (ECPE)~\cite{cui2020calibrated}. 

Table~\ref{tab:traj} summarizes the results. Across all supervision regimes,  our spatially correlated formulation (SC) yields significantly improved trajectory accuracy (ATE, BestATE), better uncertainty calibration (lower ECPE), and sharper probabilistic predictions (lower ES) relative to the Indep. approach, while matching the deterministic approach (Det.) in accuracy. Note that our uncertainty-aware framework enables principled risk-aware planning and robust navigation, which are unattainable with deterministic models.

A lower ES indicates that predicted trajectory distributions are closer to the ground truth and more concentrated around likely trajectories,  reducing overly diffuse or misleading uncertainty estimates and improving the representation of trajectory variability. Similarly, lower ECPE suggests the model’s confidence estimates are better aligned with observed outcomes, though some calibration gap may remain. Together, improvements in ES and ECPE provide more trustworthy uncertainty estimates, supporting better downstream risk-estimate and safer planning in complex environments.




\section{Conclusion}
\label{Sc.conclusion}

We introduced a tractable probabilistic framework for rough terrain modeling that captures spatially correlated aleatoric uncertainty in physics-informed parameters. Our structured covariance approach allows efficient, end-to-end uncertainty propagation from perception to robot trajectory prediction. Our experiments show that modeling spatial correlation significantly improves trajectory accuracy and calibration over uncertainty estimation baselines, underscoring the value of realistic uncertainty modeling in complex, real-world environments. Future work will address epistemic uncertainty, closed-loop control, and richer terrain semantics.

\bibliographystyle{IEEEtran}
\bibliography{refs}

\begin{thebibliography}{10}
\providecommand{\url}[1]{#1}
\csname url@samestyle\endcsname
\providecommand{\newblock}{\relax}
\providecommand{\bibinfo}[2]{#2}
\providecommand{\BIBentrySTDinterwordspacing}{\spaceskip=0pt\relax}
\providecommand{\BIBentryALTinterwordstretchfactor}{4}
\providecommand{\BIBentryALTinterwordspacing}{\spaceskip=\fontdimen2\font plus
\BIBentryALTinterwordstretchfactor\fontdimen3\font minus \fontdimen4\font\relax}
\providecommand{\BIBforeignlanguage}[2]{{%
\expandafter\ifx\csname l@#1\endcsname\relax
\typeout{** WARNING: IEEEtran.bst: No hyphenation pattern has been}%
\typeout{** loaded for the language `#1'. Using the pattern for}%
\typeout{** the default language instead.}%
\else
\language=\csname l@#1\endcsname
\fi
#2}}
\providecommand{\BIBdecl}{\relax}
\BIBdecl

\bibitem{shu2024overview}
Y.~Shu, L.~Dong, J.~Liu, C.~Liu, and W.~Wei, ``Overview of terrain traversability evaluation for autonomous robots,'' \emph{Journal of Field Robotics}, 2024.

\bibitem{papadakis2013terrain}
P.~Papadakis, ``Terrain traversability analysis methods for unmanned ground vehicles: A survey,'' \emph{Engineering Applications of Artificial Intelligence}, vol.~26, no.~4, pp. 1373--1385, 2013.

\bibitem{guastella2020learning}
D.~C. Guastella and G.~Muscato, ``Learning-based methods of perception and navigation for ground vehicles in unstructured environments: A review,'' \emph{Sensors}, vol.~21, no.~1, p.~73, 2020.

\bibitem{vasudevan2009gaussian}
S.~Vasudevan, F.~Ramos, E.~Nettleton, and H.~Durrant-Whyte, ``Gaussian process modeling of large-scale terrain,'' \emph{Journal of Field Robotics}, vol.~26, no.~10, pp. 812--840, 2009.

\bibitem{russell2021multivariate}
R.~L. Russell and C.~Reale, ``Multivariate uncertainty in deep learning,'' \emph{IEEE Transactions on Neural Networks and Learning Systems}, vol.~33, no.~12, pp. 7937--7943, 2021.

\bibitem{agishev2024monoforce}
R.~Agishev, K.~Zimmermann, V.~Kubelka, M.~Pecka, and T.~Svoboda, ``Monoforce: Self-supervised learning of physics-informed model for predicting robot-terrain interaction,'' in \emph{2024 IEEE/RSJ International Conference on Intelligent Robots and Systems (IROS)}.\hskip 1em plus 0.5em minus 0.4em\relax IEEE, 2024, pp. 12\,896--12\,903.

\bibitem{shaban2022semantic}
A.~Shaban, X.~Meng, J.~Lee, B.~Boots, and D.~Fox, ``Semantic terrain classification for off-road autonomous driving,'' in \emph{Conference on Robot Learning}.\hskip 1em plus 0.5em minus 0.4em\relax PMLR, 2022, pp. 619--629.

\bibitem{cai2025pietra}
X.~Cai, J.~Queeney, T.~Xu, A.~Datar, C.~Pan, M.~Miller, A.~Flather, P.~R. Osteen, N.~Roy, X.~Xiao \emph{et~al.}, ``Pietra: Physics-informed evidential learning for traversing out-of-distribution terrain,'' \emph{IEEE Robotics and Automation Letters}, 2025.

\bibitem{schilling2017geometric}
F.~Schilling, X.~Chen, J.~Folkesson, and P.~Jensfelt, ``gp-range,'' in \emph{2017 IEEE/RSJ International Conference on Intelligent Robots and Systems (IROS)}.\hskip 1em plus 0.5em minus 0.4em\relax IEEE, 2017, pp. 2678--2684.

\bibitem{guan2023tnes}
T.~Guan, Z.~He, R.~Song, and L.~Zhang, ``Tnes: terrain traversability mapping, navigation and excavation system for autonomous excavators on worksite,'' \emph{Autonomous Robots}, vol.~47, no.~6, pp. 695--714, 2023.

\bibitem{Fankhauser2018ProbabilisticTerrainMapping}
P.~Fankhauser, M.~Bloesch, and M.~Hutter, ``Probabilistic terrain mapping for mobile robots with uncertain localization,'' \emph{IEEE Robotics and Automation Letters (RA-L)}, vol.~3, no.~4, pp. 3019--3026, 2018.

\bibitem{triest2024unrealnet}
S.~Triest, D.~D. Fan, S.~Scherer, and A.-A. Agha-Mohammadi, ``Unrealnet: Learning uncertainty-aware navigation features from high-fidelity scans of real environments,'' in \emph{2024 IEEE International Conference on Robotics and Automation (ICRA)}.\hskip 1em plus 0.5em minus 0.4em\relax IEEE, 2024, pp. 12\,627--12\,634.

\bibitem{yang2022real}
B.~Yang, Q.~Zhang, R.~Geng, L.~Wang, and M.~Liu, ``Real-time neural dense elevation mapping for urban terrain with uncertainty estimations,'' \emph{IEEE Robotics and Automation Letters}, vol.~8, no.~2, pp. 696--703, 2022.

\bibitem{kendall2017uncertainties}
A.~Kendall and Y.~Gal, ``What uncertainties do we need in bayesian deep learning for computer vision?'' \emph{Advances in neural information processing systems}, vol.~30, 2017.

\bibitem{fan2021step}
D.~D. Fan, K.~Otsu, Y.~Kubo, A.~Dixit, J.~Burdick, and A.-A. Agha-Mohammadi, ``Step: Stochastic traversability evaluation and planning for risk-aware off-road navigation,'' \emph{arXiv preprint arXiv:2103.02828}, 2021.

\bibitem{fan2021learning}
D.~D. Fan, A.-A. Agha-Mohammadi, and E.~A. Theodorou, ``Learning risk-aware costmaps for traversability in challenging environments,'' \emph{IEEE robotics and automation letters}, vol.~7, no.~1, pp. 279--286, 2021.

\bibitem{endo2022active}
M.~Endo and G.~Ishigami, ``Active traversability learning via risk-aware information gathering for planetary exploration rovers,'' \emph{IEEE Robotics and Automation Letters}, vol.~7, no.~4, pp. 11\,855--11\,862, 2022.

\bibitem{hansen2023range}
M.~Hansen and D.~Wettergreen, ``Range-based gp maps: Local surface mapping for mobile robots using gaussian process regression in range space,'' in \emph{2023 IEEE/RSJ International Conference on Intelligent Robots and Systems (IROS)}.\hskip 1em plus 0.5em minus 0.4em\relax IEEE, 2023, pp. 7241--7248.

\bibitem{shan2018bayesian}
T.~Shan, J.~Wang, B.~Englot, and K.~Doherty, ``Bayesian generalized kernel inference for terrain traversability mapping,'' in \emph{Conference on robot learning}.\hskip 1em plus 0.5em minus 0.4em\relax PMLR, 2018, pp. 829--838.

\bibitem{xue2023traversability}
H.~Xue, H.~Fu, L.~Xiao, Y.~Fan, D.~Zhao, and B.~Dai, ``Traversability analysis for autonomous driving in complex environment: A lidar-based terrain modeling approach,'' \emph{Journal of Field Robotics}, vol.~40, no.~7, pp. 1779--1803, 2023.

\bibitem{carvalho20243d}
A.~E. Carvalho, J.~F. Ferreira, and D.~Portugal, ``3d traversability analysis and path planning based on mechanical effort for ugvs in forest environments,'' \emph{Robotics and Autonomous Systems}, vol. 171, p. 104560, 2024.

\bibitem{frey2022locomotion}
J.~Frey, D.~Hoeller, S.~Khattak, and M.~Hutter, ``Locomotion policy guided traversability learning using volumetric representations of complex environments,'' in \emph{2022 IEEE/RSJ International Conference on Intelligent Robots and Systems (IROS)}.\hskip 1em plus 0.5em minus 0.4em\relax IEEE, 2022, pp. 5722--5729.

\bibitem{wellhausen2019should}
L.~Wellhausen, A.~Dosovitskiy, R.~Ranftl, K.~Walas, C.~Cadena, and M.~Hutter, ``Where should i walk? predicting terrain properties from images via self-supervised learning,'' \emph{IEEE Robotics and Automation Letters}, vol.~4, no.~2, pp. 1509--1516, 2019.

\bibitem{castro2022does}
M.~G. Castro, S.~Triest, W.~Wang, J.~M. Gregory, F.~Sanchez, J.~G. Rogers~III, and S.~Scherer, ``How does it feel? self-supervised costmap learning for off-road vehicle traversability,'' \emph{arXiv preprint arXiv:2209.10788}, 2022.

\bibitem{saad2003iterative}
Y.~Saad, \emph{Iterative methods for sparse linear systems}.\hskip 1em plus 0.5em minus 0.4em\relax SIAM, 2003.

\bibitem{gray2006toeplitz}
R.~M. Gray \emph{et~al.}, ``Toeplitz and circulant matrices: A review,'' \emph{Foundations and Trends{\textregistered} in Communications and Information Theory}, vol.~2, no.~3, pp. 155--239, 2006.

\bibitem{horn2012matrix}
R.~A. Horn and C.~R. Johnson, \emph{Matrix analysis}.\hskip 1em plus 0.5em minus 0.4em\relax Cambridge university press, 2012.

\bibitem{philion2020lift}
J.~Philion and S.~Fidler, ``Lift, splat, shoot: Encoding images from arbitrary camera rigs by implicitly unprojecting to 3d,'' in \emph{European conference on computer vision}.\hskip 1em plus 0.5em minus 0.4em\relax Springer, 2020, pp. 194--210.

\bibitem{kingma2013auto}
D.~P. Kingma and M.~Welling, ``Auto-encoding variational bayes,'' \emph{arXiv preprint arXiv:1312.6114}, 2013.

\bibitem{pomerleau2013comparing}
F.~Pomerleau, F.~Colas, R.~Siegwart, and S.~Magnenat, ``Comparing icp variants on real-world data sets: Open-source library and experimental protocol,'' \emph{Autonomous robots}, vol.~34, no.~3, pp. 133--148, 2013.

\bibitem{gneiting2007strictly}
T.~Gneiting and A.~E. Raftery, ``Strictly proper scoring rules, prediction, and estimation,'' \emph{Journal of the American statistical Association}, vol. 102, no. 477, pp. 359--378, 2007.

\bibitem{cui2020calibrated}
P.~Cui, W.~Hu, and J.~Zhu, ``Calibrated reliable regression using maximum mean discrepancy,'' \emph{Advances in Neural Information Processing Systems}, vol.~33, pp. 17\,164--17\,175, 2020.

\end{thebibliography}

\end{document}